\newenvironment{proof}{\textbf{Proof:}\ }{\hspace{\stretch{1}}$\square$\\}
\newtheorem{theorem}{Theorem}
\DeclareFixedFont{\myfont}{OT1}{ptm}{m}{n}{7pt}
\DeclareFixedFont{\myfontb}{OT1}{ptm}{bx}{n}{7pt}
\DeclareMathOperator*{\argmin}{argmin}
\def \0 {\boldsymbol{0}}
\def \e {\boldsymbol{e}}
\def \p {\boldsymbol{p}}
\def \u {\boldsymbol{u}}
\def \v {\boldsymbol{v}}
\def \w {\boldsymbol{w}}
\def \x {\boldsymbol{x}}
\def \y {\boldsymbol{y}}
\def \z {\boldsymbol{z}}
\def \A {\boldsymbol{A}}
\def \G {\boldsymbol{G}}
\def \H {\boldsymbol{H}}
\def \I {\boldsymbol{I}}
\def \Q {\boldsymbol{Q}}
\def \X {\boldsymbol{X}}
\def \Y {\boldsymbol{Y}}
\def \wbar {\bar{\boldsymbol{w}}}
\def \alphavec {\boldsymbol{\alpha}}
\def \betavec {\boldsymbol{\beta}}
\def \etavec {\boldsymbol{\eta}}
\def \xivec {\boldsymbol{\xi}}
\def \Dcal {\mathcal{D}}
\def \Lcal {\mathcal{L}}
\def \Xcal {\mathcal{X}}
\def \Ycal {\mathcal{Y}}
\def \Rbb {\mathbb{R}}
\begin{document}
\begin{frontmatter}
\title{Large Margin Distribution Machine}

\author{Teng Zhang}
\author{Zhi-Hua Zhou\corref{cor1}}
\address{National Key Laboratory for Novel Software Technology\\
Nanjing University, Nanjing 210023, China} \cortext[cor1]{\small Corresponding author.
Email: zhouzh@nju.edu.cn}

\begin{abstract}
Support vector machine (SVM) has been one of the most popular learning algorithms, with the central idea of maximizing the \textit{minimum margin}, i.e., the smallest distance from the instances to the classification boundary. Recent theoretical results, however, disclosed that maximizing the minimum margin does not necessarily lead to better generalization performances, and instead, the margin distribution has been proven to be more crucial. In this paper, we propose the Large margin Distribution Machine (LDM), which tries to achieve a better generalization performance by optimizing the margin distribution. We characterize the margin distribution by the first- and second-order statistics, i.e., the margin mean and variance. The LDM is a general learning approach which can be used in any place where SVM can be applied, and its superiority is verified both theoretically and empirically in this paper.
\end{abstract}

\begin{keyword}
margin distribution, minimum margin, classification
\end{keyword}

\end{frontmatter}

\section{Introduction}

Support Vector Machine (SVM) \cite{Cortes1995Support,Vapnik1995The} has always been one of the most successful learning algorithms. The basic idea is to identify a classification boundary having a large margin for all the training examples, and the resultant optimization can be accomplished by a quadratic programming (QP) problem. Although SVMs have a long history of literatures, there are still great efforts \cite{Lacoste2013Block,Learning2013Cotter,Takac2013Mini,Jose2013Local,Convex2013Do} on improving SVMs.

It is well known that SVM can be viewed as a learning approach trying to maximize over training examples the \textit{minimum margin}, i.e., the smallest distance from the examples to the classification boundary, and the margin theory \cite{Vapnik1995The} provided a good support to the generalization performance of SVM. It is noteworthy that the margin theory not only plays an important role for SVMs, but also has been extended to interpret the good generalization of many other learning approaches, such as AdaBoost \cite{Freund1995A}, a major representative of ensemble methods \cite{Zhou2012Ensemble}. Specifically, Schapire et al. \cite{Schapire1998Boosting} first suggested margin theory to explain the phenomenon that AdaBoost seems resistant to overfitting; soon after, Breiman \cite{Breiman1999Prediction} indicated that the minimum margin is crucial and developed a boosting-style algorithm, Arc-gv, which is able to maximize the minimum margin but with a poor generalization performance. Later, Reyzin et al. \cite{Reyzin2006How} found that although Arc-gv tends to produce larger minimum margin, it suffers from a poor margin distribution; they conjectured that the margin distribution, rather than the minimum margin, is more crucial to the generalization performance. Such a conjecture has been theoretically studied \cite{Wang2011A,Gao2012On}, and it was recently proven by Gao and Zhou \cite{Gao2012On}. Moreover, it was disclosed that rather than simply considering a single-point margin, both the margin mean and variance are important \cite{Gao2012On}. All these theoretical studies, however, focused on boosting-style algorithms, whereas the influence of the margin distribution for SVMs in practice has not been well exploited.

In this paper, we propose the Large margin Distribution Machine (LDM), which tries to achieve strong generalization performance by optimizing the margin distribution. Inspired by the recent theoretical result \cite{Gao2012On}, we characterize the margin distribution by the first- and second-order statistics, and try to maximize the margin mean and minimize the margin variance simultaneously. For optimization, we propose a dual coordinate descent method for kernel LDM, and propose an averaged stochastic gradient descent (ASGD) method for large scale linear kernel LDM. Comprehensive experiments on twenty regular scale data sets and twelve large scale data sets show the superiority of LDM to SVM and many state-of-the-art methods, verifying that the margin distribution is more crucial for SVM-style learning approaches than minimum margin.

The rest of this paper is organized as follows. Section 2 introduces some preliminaries. Section 3 presents the LDM. Section 4 reports on our experiments. Section 5 discusses about some related works. Finally, Section 6 concludes.

\section{Preliminaries} \label{sec:preliminaries}

We denote by $\Xcal \in \Rbb^d$ the instance space and $\Ycal =\{+1,-1\}$ the label set. Let $\Dcal$ be an unknown (underlying) distribution over $\Xcal \times \Ycal$. A training set of size $m$
\begin{align*}
S = \{(\x_1, y_1), (\x_2, y_2), \dots, (\x_m, y_m)\},
\end{align*}
is drawn identically and independently (i.i.d.) according to the distribution $\Dcal$. Our goal is to learn a function which is used to predict the labels for future unseen instances.

For SVMs, $f$ is regarded as a linear model, i.e., $f(\x) = \w^\top \phi(\x)$ where $\w$ is a linear predictor, $\phi(\x)$ is a feature mapping of $\x$ induced by a kernel $k$, i.e., $k(\x_i, \x_j) = \phi(\x_i)^\top \phi(\x_j)$. According to \cite{Cortes1995Support,Vapnik1995The}, the margin of instance $(\x_i, y_i)$ is formulated as
\begin{align} \label{defn:margin}
\gamma_i = y_i \w^\top \phi(\x_i), \forall i = 1,\ldots, m.
\end{align}
From \cite{Cristianini2000introduction}, it is shown that in separable cases where the training examples can be separated with the zero error, SVM with hard-margin (or Hard-margin SVM),
\begin{align*}
\min_{\w}   & \ \ \frac{1}{2} \w^{\top}\w \\
\mbox{s.t.} & \ \ y_i \w^{\top} \phi(\x_i) \geq 1, \ i = 1, \ldots, m,
\end{align*}
is regarded as the maximization of the minimum margin \{$\min \{\gamma_i\}_{i=1}^{m}$\}.

In non-separable cases where the training examples cannot be separated with the zero error, SVM with soft-margin (or Soft-margin SVM) is posed,
\begin{align} \label{eq:soft-margin SVM}
\begin{split}
\min_{\w, \xivec}  & \ \ \frac{1}{2} \w^{\top}\w + C \sum_{i=1}^{m}\xi_i \\
\mbox{s.t.}        & \ \ y_i \w^{\top} \phi(\x_i) \geq 1 - \xi_i,\\ & \ \ \xi_i \geq 0, \; i = 1, \ldots, m.
\end{split}
\end{align}
where $\xivec = [\xi_1, \ldots, \xi_m]^\top$ measure the losses of instances, and $C$ is a trading-off parameter. There exists a constant $\bar{C}$ such that (\ref{eq:soft-margin SVM}) can be equivalently reformulated as,
\begin{align*}
\max_{\w} & \ \ \gamma_0 - \bar{C} \sum\nolimits_{i=1}^{m} \xi_i \\
\mbox{s.t.} & \ \ \gamma_i \geq \gamma_0 - \xi_i,\\ & \ \ \xi_i \geq 0, \; i = 1, \ldots, m,
\end{align*}
where $\gamma_0$ is a relaxed minimum margin, and $\bar{C}$ is the trading-off parameter. Note that $\gamma_0$ indeed characterizes the top-$p$ minimum margin \cite{Gao2012On}; hence, SVMs (with both hard-margin and soft-margin) consider only a single-point margin and have not exploited the whole margin distribution.

\section{LDM} \label{sec:LDM}

In this section, we first formulate the margin distribution, and then present the optimization algorithms and the theoretical guarantee.

\subsection{Formulation} \label{sec:formulation}

The two most straightforward statistics for characterizing the margin distribution are the first- and second-order statistics, that is, the mean and the variance of the margin. Formally, denote $\X$ as the matrix whose $i$-th column is $\phi(\x_i)$, i.e., $\X = [\phi(\x_1) \ldots \phi(\x_m)]$, $\y = [y_1, \ldots, y_m]^\top$ is a column vector, and $\Y$ is a $m \times m$ diagonal matrix with $y_1, \ldots, y_m$ as the diagonal elements. According to the definition in (\ref{defn:margin}), the margin mean is
\begin{align} \label{eq:margin mean}
\bar{\gamma} = \frac{1}{m}\sum_{i=1}^{m} y_i \w^\top \phi(\x_i) = \frac{1}{m} (\X \y)^\top \w,
\end{align}
and the margin variance is
\begin{align} \label{eq:margin variance}
\begin{split}
\hat{\gamma} & = \sum_{i=1}^m \sum_{j=1}^m (y_i \w^\top \phi(\x_i) - y_j \w^\top \phi(\x_j))^2    \\
   & = \frac{2}{m^2} (m \w^\top \X \X^\top \w - \w^\top \X \y \y^\top \X^\top \w).
\end{split}
\end{align}
Inspired by the recent theoretical result \cite{Gao2012On}, LDM attempts to maximize the margin mean and minimize the margin variance simultaneously.

We first consider a simpler scenario, i.e., the separable cases where the training examples can be separated with the zero error. In these cases, the maximization of the margin mean and the minimization of the margin variance leads to the following hard-margin LDM,
\begin{align*}
\min_{\w}   & \ \ \frac{1}{2} \w^\top \w + \lambda_1 \hat{\gamma} - \lambda_2 \bar{\gamma} \\
\mbox{s.t.} & \ \ y_i \w^{\top} \phi(\x_i)  \geq 1, \ i = 1, \ldots, m,
\end{align*}
where $\lambda_1$ and $\lambda_2$ are the parameters for trading-off the margin variance, the margin mean and the model complexity. It's evident that the hard-margin LDM subsumes the hard-margin SVM when $\lambda_1$ and $\lambda_2$ equal $0$.

For the non-separable cases, similar to soft-margin SVM, the soft-margin LDM leads to
\begin{align} \label{eq:soft-margin LDM}
\begin{split}
\min_{\w, \xivec} & \ \ \frac{1}{2} \w^\top \w + \lambda_1 \hat{\gamma} - \lambda_2 \bar{\gamma} + C \sum_{i=1}^{m} \xi_i \\
\mbox{s.t.}  & \ \ y_i \w^{\top} \phi(\x_i) \geq 1 - \xi_i,\\ & \ \ \xi_i \geq 0, \ i = 1, \ldots, m.
\end{split}
\end{align}
Similarly, soft-margin LDM subsumes the soft-margin SVM if $\lambda_1$ and $\lambda_2$ both equal $0$. Because the soft-margin SVM often performs much better than the hard-margin one, in the following we will focus on soft-margin LDM and if without clarification, LDM is referred to the soft-margin LDM.

\subsection{Optimization} \label{sec:optimization}

We in this section first present a dual coordinate descent method for kernel LDM, and then present an average stochastic gradient descent (ASGD) method for large scale linear kernel LDM.

\subsubsection{Kernel LDM} \label{sec:kernelLDM}

By substituting (\ref{eq:margin mean})-(\ref{eq:margin variance}), (\ref{eq:soft-margin LDM}) leads to the following quadratic programming problem,
\begin{align} \label{eq:primal-LDM}
\begin{split}
\min_{\w, \xivec} & \ \ \frac{1}{2} \w^\top \w + \frac{2\lambda_1}{m^2} (m \w^\top \X \X^\top \w - \w^\top \X \y \y^\top \X^\top \w) \\
                  & \ \ - \lambda_2 \frac{1}{m} (\X \y)^\top \w + C \sum_{i=1}^{m} \xi_i \\
\mbox{s.t.}       & \ \ y_i \w^{\top} \phi(\x_i) \geq 1 - \xi_i, \\ & \ \ \xi_i \geq 0, \ i = 1, \ldots, m.
\end{split}
\end{align}
(\ref{eq:primal-LDM}) is often intractable due to the high or infinite dimensionality of $\phi(\cdot)$. Fortunately, inspired by the representer theorem in \cite{Scholkopf2001learning}, the following theorem states that the optimal solution for (\ref{eq:primal-LDM}) can be spanned by $\{\phi(\x_i), 1 \leq i \leq m\}$.
\begin{theorem} \label{thm:representer theorem}
The optimal solution $\w^*$ for problem (\ref{eq:primal-LDM}) admits a representation of the form
\begin{align} \label{eq:optimal solution form}
\w^* = \sum_{i=1}^m \alpha_i \phi(\x_i) = \X \alphavec,
\end{align}
where $\alphavec = [\alpha_1, \dots, \alpha_m]^\top$ are the coefficients.
\end{theorem}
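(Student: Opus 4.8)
The plan is to follow the classical orthogonal-decomposition argument behind the representer theorem, adapted to the extra quadratic term coming from the margin variance. First I would write the feature space as the direct sum of $V = \mathrm{span}\{\phi(\x_1),\ldots,\phi(\x_m)\}$ and its orthogonal complement $V^\perp$, and decompose any candidate predictor as $\w = \w_\parallel + \w_\perp$ with $\w_\parallel \in V$ and $\w_\perp \in V^\perp$. Since $\w_\parallel \in V$, it already has the form $\X\alphavec$; the goal is therefore to show that at any optimum of (\ref{eq:primal-LDM}) the orthogonal component $\w_\perp$ must vanish.

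The key observation is that every appearance of $\w$ in (\ref{eq:primal-LDM}), apart from the regularizer $\frac{1}{2}\w^\top\w$, factors through $\X^\top \w$. Indeed, because $\phi(\x_i)^\top \w_\perp = 0$ for every $i$, we have $\X^\top \w = \X^\top \w_\parallel$; consequently the constraints $y_i \w^\top \phi(\x_i) \ge 1 - \xi_i$, the margin-mean term $(\X\y)^\top \w$, and both pieces of the margin-variance term, namely $\w^\top \X \X^\top \w = \|\X^\top \w\|^2$ and $\w^\top \X \y \y^\top \X^\top \w = ((\X\y)^\top\w)^2$, all depend on $\w$ only through $\w_\parallel$. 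Thus replacing $\w$ by $\w_\parallel$ leaves the feasible set, the loss $C\sum_{i} \xi_i$, the margin mean, and the margin variance completely unchanged.

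It then remains to treat the regularizer. Since $\w_\parallel \perp \w_\perp$, the term decomposes as $\frac{1}{2}\w^\top\w = \frac{1}{2}\|\w_\parallel\|^2 + \frac{1}{2}\|\w_\perp\|^2$, and this is the only place where $\w_\perp$ enters the objective. Holding $\w_\parallel$ and $\xivec$ fixed, the objective is therefore strictly increasing in $\|\w_\perp\|$, so its minimum is attained only at $\w_\perp = \0$. Hence for any optimal solution the orthogonal component vanishes and $\w^* = \w_\parallel = \X\alphavec$, which is the claimed representation.

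I expect the main point requiring care to be the verification that the nonstandard margin-variance term does not secretly reintroduce a dependence on $\w_\perp$: unlike the plain SVM regularizer, it is itself a quadratic form in $\w$, and one must confirm that it is built entirely from the Gram-type quantities $\X^\top \w$ so that it collapses onto $\w_\parallel$. Once that factorization is made explicit, the remaining steps reduce to the standard reasoning that the orthogonal component can only inflate the norm penalty while affecting nothing else.
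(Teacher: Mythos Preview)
Your proposal is correct and follows essentially the same approach as the paper: decompose $\w$ into a component in $\mathrm{span}\{\phi(\x_i)\}$ and an orthogonal component, observe that every term except $\frac{1}{2}\w^\top\w$ depends on $\w$ only through $\X^\top\w$ and is therefore unaffected by the orthogonal part, and conclude that the orthogonal part can only increase the regularizer. The paper's proof is just a slightly terser version of exactly this argument, using the notation $\w = \X\alphavec + \v$ with $\X^\top\v = \0$ in place of your $\w_\parallel + \w_\perp$.
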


\begin{proof}
$\w^*$ can be decomposed into a part that lives in the span of $\phi(\x_i)$ and an orthogonal part, i.e.,
\begin{align*}
\w = \sum_{i=1}^m \alpha_i \phi(\x_i) + \v = \X \alphavec + \v
\end{align*}
for some $\alphavec = [\alpha_1,\ldots,\alpha_m]^\top$ and $\v$ satisfying $\phi(\x_j)^\top \v = 0$ for all $j$, i.e., $\X^\top \v = \0 $. Note that
\begin{align*}
\X^\top \w = \X^\top (\X \alphavec + \v) = \X^\top \X \alphavec,
\end{align*}
so the second and the third terms of (\ref{eq:primal-LDM}) are independent of $\v$; further note that the constraint is also independent of $\v$, thus the last terms of (\ref{eq:primal-LDM}) is also independent of $\v$.

As for the first term of (\ref{eq:primal-LDM}), since $\X^\top \v = \0 $, consequently we get
\begin{align*}
\w^\top \w & = (\X \alphavec + \v)^\top (\X \alphavec + \v) = \alphavec^\top \X^\top \X \alphavec + \v^\top \v  \\
& \geq \alphavec^\top \X^\top \X \alphavec
\end{align*}
with equality occurring if and only if $\v = \0 $.

So, setting $\v = \0 $ does not affect the second, the third and the last term while strictly reduces the first term of (\ref{eq:primal-LDM}). Hence, $\w^*$ for problem (\ref{eq:primal-LDM}) admits a representation of the form ({\ref{eq:optimal solution form}}).
\end{proof}

According to Theorem \ref{thm:representer theorem}, we have
\begin{align*}
\X^\top \w & = \X^\top \X \alphavec = \G \alphavec,    \\
\w^\top \w & = \alphavec^\top \X^\top \X \alphavec = \alphavec^\top \G \alphavec,
\end{align*}
where $\G = \X^\top \X$ is the kernel matrix. Let $\G_{:i}$ denote the $i$-th column of $\G$, then (\ref{eq:primal-LDM}) can be cast as
\begin{align} \label{eq:kernel soft-margin LDM}
\begin{split}
\min_{\alphavec,\xivec} & \ \ \frac{1}{2} \alphavec^\top \Q \alphavec + \p^\top \alphavec + C \sum_{i=1}^{m} \xi_i  \\
\mbox{s.t.}                    & \ \ y_i \alphavec^{\top} \G_{:i} \geq 1 - \xi_i,\\ & \ \ \xi_i \geq 0, \ i = 1, \ldots, m, \\
\end{split}
\end{align}
where $\Q = 4\lambda_1 (m\G^\top\G - (\G\y) (\G\y)^\top)/m^2 + \G$ and $\p = -\lambda_2 \G \y/m$. By introducing the lagrange multipliers $\betavec = [\beta_1, \ldots, \beta_m]^\top$ and $\etavec = [\eta_1, \ldots, \eta_m]^\top$ for the first and the second constraints respectively, the Lagrangian of (\ref{eq:kernel soft-margin LDM}) leads to
\begin{align} \label{eq:lagrange}
\begin{split}
L(\alphavec, \xivec, \betavec, \etavec) & = \frac{1}{2} \alphavec^\top \Q \alphavec + \p^\top \alphavec + C \sum_{i=1}^{m} \xi_i \\
& \ \ \ - \sum_{i=1}^{m} \beta_i (y_i \alphavec^{\top} \G_{:i} - 1 + \xi_i) - \sum_{i=1}^{m} \eta_i \xi_i.
\end{split}
\end{align}
By setting the partial derivations of \{$\alphavec, \xivec$\} to zero, we have
\begin{align}
\label{eq:KKT1}
\frac{\partial L}{\partial \alphavec} & = \Q \alphavec + \p - \sum_{i=1}^{m} \beta_i y_i \G_{:i},   \\
\label{eq:KKT2}
\frac{\partial L}{\partial \xi_i} & = C - \beta_i - \eta_i = 0, \ i = 1, \dots, m.
\end{align}
By substituting (\ref{eq:KKT1}) and (\ref{eq:KKT2}) into (\ref{eq:lagrange}), the dual \footnote{Here we omit constants without influence on optimization.}
of (\ref{eq:kernel soft-margin LDM}) can be cast as:
\begin{align} \label{eq:final LDM}
\begin{split}
\min_{\betavec} & \ \ f(\betavec) =  \frac{1}{2} \betavec^\top \H \betavec + \left( \frac{\lambda_2}{m} \H \e - \e \right)^\top \betavec, \\
\mbox{s.t.}     & \ \ 0 \leq \beta_i \leq C, \ i = 1, \ldots, m.
\end{split}
\end{align}
where $\H = \Y \G \Q^{-1} \G \Y $, $\Q^{-1}$ refers to the inverse matrix of $\Q$ and $\e$ stands for the all-one vector. Due to the simple decoupled box constraint and the convex quadratic objective function, as suggested by \cite{Yuan2012Recent}, (\ref{eq:final LDM}) can be efficiently solved by the dual coordinate descent method. In dual coordinate descent method \cite{Hsieh2008A}, one of the variables is selected to minimize while the other variables are kept as constants at each iteration, and a closed-form solution can be achieved at each iteration. Specifically, to minimize $\beta_i$ by keeping the other $\beta_{j\neq i}$'s as constants, one needs to solve the following subproblem,
\begin{align} \label{eq:simpleQP}
\begin{split}
\min_{t}    & \ \ f(\betavec + t \e_i) \\
\mbox{s.t.} & \ \ 0 \leq \beta_i + t \leq C,
\end{split}
\end{align}
where $\e_i$ denotes the vector with $1$ in the $i$-th coordinate and $0$'s elsewhere. Let $\H = [h_{ij}]_{i,j=1,\dots,m}$, we have
\begin{align*}
f(\betavec + t \e_i) = \frac{1}{2} h_{ii} t^2 + [\nabla f(\betavec)]_i t + f(\betavec),
\end{align*}
where $[\nabla f(\betavec)]_i$ is the $i$-th component of the gradient $\nabla f(\betavec)$. Note that $f(\betavec)$ is independent of $t$ and thus can be dropped. Considering that $f(\betavec + t \e_i)$ is a simple quadratic function of $t$, and further note the box constraint $0 \leq \alpha_i \leq C$, the minimizer of (\ref{eq:simpleQP}) leads to a closed-form solution,
\begin{align*}
\beta_i^{new} = \min\left( \max \left( \beta_i - \frac{[\nabla f(\betavec)]_i}{h_{ii}}, 0 \right), C \right).
\end{align*}
Algorithm \ref{alg:kernelLDM} summarizes the pseudo-code of kernel LDM.

\begin{algorithm}[!t]
\caption{Kernel LDM}
\begin{algorithmic}
\STATE {\bfseries Input:} Data set $\X$, $\lambda_1$, $\lambda_2$, $C$
\STATE {\bfseries Output:} $\alphavec$
\STATE  Initialize $\betavec = \0 $, $\alphavec = \frac{\lambda_2}{m} \Q^{-1} \G \y$, $\A = \Q^{-1} \G \Y$, $h_{ii} = \e_i^\top \Y \G \Q^{-1} \G \Y \e_i$;
\WHILE{$\betavec$ not converge}
\FOR{$i = 1, \dots m$}
\STATE  $[\nabla f(\betavec)]_i \leftarrow \e_i^\top \Y \G \alphavec - 1$;
\STATE  $\beta_i^{old} \leftarrow \beta_i$;
\STATE  $\beta_i \leftarrow \min\left( \max \left( \beta_i -\frac{[\nabla f(\betavec)]_i}{h_{ii}}, 0 \right), C \right)$;
\STATE  $\alphavec \leftarrow \alphavec + (\beta_i - \beta_i^{old}) \A \e_i$;
\ENDFOR
\ENDWHILE
\end{algorithmic}
\label{alg:kernelLDM}
\end{algorithm}

For prediction, according to (\ref{eq:KKT1}), one can obtain the coefficients $\alphavec$ from the optimal $\betavec^*$ as
\begin{align*}
\alphavec & = \Q^{-1} (\G \Y \betavec^* - \p) = \Q^{-1} \left( \frac{\lambda_2}{m} \G \Y \e + \G \Y \betavec^* \right)  \\
          & = \Q^{-1} \G \Y \left( \frac{\lambda_2}{m} \e + \betavec^* \right).
\end{align*}
Hence for testing instance $\z$, its label can be obtained by
\begin{align*}
sgn\left( \w^\top \phi(\z) \right) = sgn\left( \sum_{i=1}^{m}\alpha_i k(\x_i, \z) \right).
\end{align*}

\subsubsection{Large Scale Kernel LDM} \label{sec:linearLDM}

In section \ref{sec:kernelLDM}, the proposed method can efficiently deal with kernel LDM. However, the inherent computational cost for the kernel matrix in kernel LDM takes $O(m^2)$ time, which might be computational prohibitive for large scale problems. To make LDM more useful, in the following, we present a fast linear kernel LDM for large scale problems by adopting the average stochastic gradient descent (ASGD) method \cite{Polyak1992Acceleration}.

For linear kernel LDM, (\ref{eq:soft-margin LDM}) can be reformulated as the following form,
\begin{align} \label{eq:linear LDM}
\begin{split}
\min_{\w} \ \ & g(\w) = \frac{1}{2} \w^\top \w + \frac{2 \lambda_1}{m^2} \w^\top (m  \X \X^\top - \X \y \y^\top \X^\top ) \w - \frac{\lambda_2}{m} (\X \y)^\top \w \\ & + C \sum_{i=1}^m \max\{0, 1 - y_i \w^\top \x_i\},
\end{split}
\end{align}
where $\X = [\x_1 \ldots \x_m]$, $\y = [y_1, \ldots, y_m]^\top$ is a column vector.

For large scale problems, computing the gradient of (\ref{eq:linear LDM}) is expensive because its computation involves all the training examples. Stochastic gradient descent (SGD) works by computing a noisy unbiased estimation of the gradient via sampling a subset of the training examples. Theoretically, when the objective is convex, it can be shown that in expectation, SGD converges to the global optimal solution \cite{Kushner2003Stochastic,Bottou2010Large-Scale}. During the past decade, SGD has been applied to various machine learning problems and achieved promising performances \cite{Zhang2004Solving,Shalev-Shwartz2007Pegasos,Bordes2009SGD-QN,Shamir2013Stochastic}.

The following theorem presents an approach to obtain an unbiased estimation of the gradient $\nabla g(\w)$.
\begin{theorem} \label{thm:SGD4LDM}
If two examples $(\x_i, y_i)$ and $(\x_j, y_j)$ are sampled from training set randomly, then
\begin{align} \label{eq:SGD update for LDM}
\nabla g(\w, & \x_i, \x_j) = 4 \lambda_1 \x_i \x_i^\top \w - 4 \lambda_1 y_i \x_i y_j \x_j ^\top \w + \w
- \lambda_2 y_i \x_i - m C
\begin{cases}
y_i \x_i & i \in \I ,    \\
0 & otherwise,
\end{cases}
\end{align}
where $\I \equiv \{ i \ | \ y_i \w^\top \x_i < 1 \}$ is an unbiased estimation of $\nabla g(\w)$.
\end{theorem}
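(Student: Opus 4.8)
The plan is to compute the exact gradient $\nabla g(\w)$ explicitly, rewrite every term as an average (or double average) over the training indices $k,l$, and then check that taking the expectation of $\nabla g(\w,\x_i,\x_j)$ over uniformly and independently sampled indices $i,j$ reproduces each of those terms. So the proof is essentially a term-by-term matching, with the sampling model made precise.

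First I would differentiate $g$ in (\ref{eq:linear LDM}) term by term. The regularizer $\tfrac{1}{2}\w^\top\w$ contributes $\w$. The margin-variance term is a symmetric quadratic form, so its gradient is $\tfrac{4\lambda_1}{m^2}(m\X\X^\top-\X\y\y^\top\X^\top)\w$; using $\X\X^\top=\sum_k\x_k\x_k^\top$ and $\X\y=\sum_k y_k\x_k$ I would rewrite this as $\tfrac{4\lambda_1}{m}\sum_k\x_k\x_k^\top\w-\tfrac{4\lambda_1}{m^2}\sum_{k,l}y_k y_l\x_k\x_l^\top\w$. The margin-mean term gives $-\tfrac{\lambda_2}{m}\sum_k y_k\x_k$, and the hinge-loss term, handled via its subgradient $-y_k\x_k$ on $\{k:y_k\w^\top\x_k<1\}$, gives $-C\sum_{k\in\I}y_k\x_k$ with $\I=\{k:y_k\w^\top\x_k<1\}$.

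Next I would fix the sampling model: $i$ and $j$ are each drawn uniformly and independently from $\{1,\dots,m\}$, so that $\mathbb{E}[h(\x_i,y_i)]=\tfrac{1}{m}\sum_k h(\x_k,y_k)$ for any $h$, and expectations of quantities depending separately on $i$ and on $j$ factorize. I would then take the expectation of each of the five terms of (\ref{eq:SGD update for LDM}). The terms $\w$, $4\lambda_1\x_i\x_i^\top\w$, $-\lambda_2 y_i\x_i$, and the scaled hinge term $-mC\,\mathbb{1}[i\in\I]\,y_i\x_i$ each depend only on the single index $i$, so averaging over $i$ immediately matches the first, second, third, and fourth pieces of $\nabla g(\w)$; in particular the factor $m$ in $-mC\,y_i\x_i$ cancels the $1/m$ from uniform averaging, leaving exactly $-C\sum_{k\in\I}y_k\x_k$.

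The main step, and the reason two examples are sampled rather than one, is the bilinear cross term $-4\lambda_1\,y_i\x_i\,y_j\x_j^\top\w$. Here I would invoke the independence of $i$ and $j$ to split the expectation as $-4\lambda_1\,\mathbb{E}[y_i\x_i]\,\mathbb{E}[y_j\x_j^\top]\,\w=-\tfrac{4\lambda_1}{m^2}\bigl(\sum_k y_k\x_k\bigr)\bigl(\sum_l y_l\x_l\bigr)^\top\w$, which is precisely $-\tfrac{4\lambda_1}{m^2}\X\y\y^\top\X^\top\w$, matching the remaining piece of the variance gradient. The only subtlety I anticipate is making explicit that $\X\y\y^\top\X^\top$ is an \emph{outer product of two sums}, so a single draw cannot give an unbiased estimate of it; independence of the second draw is exactly what decouples the resulting double sum $\sum_{k,l}$. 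Collecting the five matched expectations yields $\mathbb{E}[\nabla g(\w,\x_i,\x_j)]=\nabla g(\w)$, which is the claimed unbiasedness.
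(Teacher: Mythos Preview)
Your proposal is correct and follows essentially the same approach as the paper: compute the full gradient $\nabla g(\w)$, record the identities $\mathbb{E}[\x_i\x_i^\top]=\tfrac{1}{m}\X\X^\top$ and $\mathbb{E}[y_i\x_i]=\tfrac{1}{m}\X\y$, and then use linearity of expectation together with the independence of $i$ and $j$ to split the bilinear term and match $\nabla g(\w)$ term by term. Your write-up is in fact a bit more explicit than the paper's about the sampling model and about why the outer-product structure of $\X\y\y^\top\X^\top$ forces a second independent draw, but the argument itself is the same.
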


\begin{proof}
Note that the gradient of $g(\w)$ is
\begin{align*}
\nabla g(\w) = \Q \w + \p - C \sum_{i=1}^m y_i \x_i, i \in \I,
\end{align*}
where $\Q = 4\lambda_1 (m\X\X^\top - \X\y (\X\y)^\top)/m^2 + \I$ and $\p = -\lambda_2 \X \y/m$. Further note that
\begin{align} \label{eq:expectation}
\begin{split}
& E_{\x_i} [\x_i \x_i^\top] = \frac{1}{m} \sum_{i=1}^m \x_i \x_i^\top = \frac{1}{m} \X \X^\top,   \\
& E_{\x_i} [y_i \x_i] = \frac{1}{m} \sum_{i=1}^m y_i \x_i = \frac{1}{m} \X \y.
\end{split}
\end{align}
According to the linearity of expectation, the independence between $\x_i$ and $\x_j$, and with (\ref{eq:expectation}), we have
\begin{align*}
& \ \ \ \ E_{\x_i, \x_j} [\nabla g(\w, \x_i, \x_j)] \\
& = 4 \lambda_1 E_{\x_i} [\x_i \x_i^\top] \w - 4 \lambda_1 E_{\x_i} [y_i \x_i] E_{\x_j} [y_j \x_j]^\top \w + \w - \lambda_2 E_{\x_i} [y_i \x_i] - mC E_{\x_i} [y_i \x_i \ | \ i \in \I] \\
& = 4 \lambda_1 \frac{1}{m} \X \X^\top \w - 4 \lambda_1 \frac{1}{m} \X \y \left( \frac{1}{m} \X \y \right)^\top \w + \w - \lambda_2 \frac{1}{m} \X \y - mC \frac{1}{m} \sum_{i=1}^m y_i \x_i, i \in \I \\
& = \Q \w + \p - C \sum_{i=1}^m y_i \x_i, i \in \I  \\
& = \nabla g(\w).
\end{align*}
It is shown that $\nabla g(\w, \x_i, \x_j)$ is a noisy unbiased gradient of $g(\w)$. 
\end{proof}

With Theorem \ref{thm:SGD4LDM}, the stochastic gradient update can be formed as
\begin{align} \label{eq:SGD update}
\w_{t+1} = \w_t - \eta_t \nabla g(\w, \x_i, \x_j),
\end{align}
where \(\eta_t\) is a suitably chosen step-size parameter in the \(t\)-th iteration.

In practice, we use averaged stochastic gradient descent (ASGD) which is more robust than SGD \cite{Xu2010Towards}. At each iteration, besides performing the normal stochastic gradient update (\ref{eq:SGD update}), we also compute
\begin{align*}
\wbar_t = \frac{1}{t - t_0} \sum_{i=t_0+1}^t \w_i,
\end{align*}
where $t_0$ determines when we engage the averaging process. This average can be computed efficiently using a recursive formula:
\begin{align*}
\wbar_{t+1} = \wbar_t + \mu_t (\w_{t+1} - \wbar_t),
\end{align*}
where $\mu_t = 1 / \max \{ 1, t - t_0 \}$.

Algorithm \ref{alg:linearLDM} summarizes the pseudo-code of large scale kernel LDM.

\begin{algorithm}[!t]
\caption{Large Scale Kernel LDM}
\begin{algorithmic}
\STATE {\bfseries Input:} Data set $\X$, $\lambda_1$, $\lambda_2$, $C$
\STATE {\bfseries Output:} $\wbar$
\STATE  Initialize $\u = \0 $, $T = 5$;
\FOR{$t = 1, \dots Tm$}
\STATE  Randomly sample two training examples $(\x_i, y_i)$ and $(\x_j, y_j)$;
\STATE  Compute $\nabla g(\w, \x_i, \x_j)$ as in (\ref{eq:SGD update for LDM});
\STATE  $\w \leftarrow \w - \eta_t \nabla g(\w, \x_i, \x_j)$;
\STATE  $\wbar \leftarrow \wbar + \mu_t (\w - \wbar)$;
\ENDFOR
\end{algorithmic}
\label{alg:linearLDM}
\end{algorithm}

\subsection{Analysis} \label{sec:analysis}

In this section, we study the statistical property of LDM. Specifically, we derive a bound on the expectation of error for LDM according to the leave-one-out cross-validation estimate, which is an unbiased estimate of the probability of test error.

Here we only consider the linear case (\ref{eq:linear LDM}) for simplicity, however, the results are also applicable to any other feature mapping $\phi$. Following the same steps in Section \ref{sec:kernelLDM}, one can have the dual problem of (\ref{eq:linear LDM}), i.e.,
\begin{align} \label{eq:dual linear LDM}
\begin{split}
\min_{\alphavec} & \ \ f(\alphavec) =  \frac{1}{2} \alphavec^\top \H \alphavec + \left( \frac{\lambda_2}{m} \H \e - \e \right)^\top \alphavec, \\
\mbox{s.t.}     & \ \ 0 \leq \alpha_i \leq C, \ i = 1, \ldots, m.
\end{split}
\end{align}
where $\H = \Y \X^\top \Q^{-1} \X \Y $, $\Q = \frac{4\lambda_1}{m^2} (m \X \X^\top - \X\y (\X\y)^\top) + \I$, $\e$ stands for the all-one vector and $\Q^{-1}$ refers to the inverse matrix of $\Q$.

\begin{theorem} \label{thm:bound}
Let $\alphavec$ denote the optimal solution of (\ref{eq:dual linear LDM}), and $E[R(\alphavec)]$ be the expectation of the probability of error, then we have
\begin{align} \label{eq:bound}
E[R(\alphavec)] \leq \frac{E[h \sum_{i \in \I_1} \alpha_i + |\I_2|]}{m},
\end{align}
where $\I_1 \equiv \{ i \ | \ 0 < \alpha_i < C \}$, $\I_2 \equiv \{ i \ | \ \alpha_i = C \}$ and $h = \max\{ diag\{\H\} \}$.
\end{theorem}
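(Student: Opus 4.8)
The plan is to follow the classical leave-one-out (LOO) route for support-vector classifiers. The backbone is the Luntz--Brailovsky identity, which makes the LOO estimate an unbiased estimate of the expected error:
\begin{align*}
E[R(\alphavec)] = \frac{1}{m}\, E\Big[ \sum_{p=1}^m \mathbf{1}\big( y_p\,\w_{(p)}^\top \x_p < 0 \big) \Big],
\end{align*}
where $\w_{(p)}$ is the predictor obtained by re-solving (\ref{eq:dual linear LDM}) after deleting $(\x_p,y_p)$. It then suffices to bound, for any fixed sample, the number of indices $p$ at which the LOO predictor misclassifies $\x_p$, and to take expectations; the right-hand side of (\ref{eq:bound}) is exactly such a bound.

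First I would recast the KKT conditions of the box-constrained dual as margin conditions. From the recovery formula $\w=\Q^{-1}\X\Y(\tfrac{\lambda_2}{m}\e+\alphavec)$ one checks $y_p\w^\top\x_p=[\H(\tfrac{\lambda_2}{m}\e+\alphavec)]_p$, whence $[\nabla f(\alphavec)]_p=y_p\w^\top\x_p-1$. Stationarity then yields three regimes: $\alpha_p=0 \Rightarrow y_p\w^\top\x_p\ge 1$; $\,0<\alpha_p<C \Rightarrow y_p\w^\top\x_p=1$ (the free support vectors $\I_1$); and $\alpha_p=C \Rightarrow y_p\w^\top\x_p\le 1$ (the bounded support vectors $\I_2$). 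The outer two regimes are easy. If $\alpha_p=0$ the point is not a support vector, deleting it leaves the dual optimum unchanged, so $\w_{(p)}=\w$ and $y_p\w_{(p)}^\top\x_p\ge 1>0$, and such an index is never a LOO error; each index of $\I_2$ is counted crudely as at most one possible error, contributing at most $|\I_2|$.

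The heart of the argument is the free support vectors, where I would prove a Jaakkola--Haussler-type perturbation inequality: deleting example $p$ lowers its own signed decision value by at most $\alpha_p h_{pp}$,
\begin{align*}
y_p\,\w_{(p)}^\top\x_p \ \ge\ y_p\w^\top\x_p - \alpha_p h_{pp} \ =\ 1 - \alpha_p h_{pp}.
\end{align*}
The mechanism is a feasible-point comparison: the vector obtained from $\alphavec$ by zeroing coordinate $p$ is feasible for the reduced dual, and comparing its objective against the reduced optimum through the convexity of $f$ yields this one-sided bound on the $p$-th decision value. Granting it, a LOO error at $p\in\I_1$ forces $\alpha_p h_{pp}>1$; since $\mathbf{1}(\alpha_p h_{pp}>1)\le \alpha_p h_{pp}\le h\,\alpha_p$ with $h=\max_i h_{ii}$, the free support vectors contribute at most $h\sum_{i\in\I_1}\alpha_i$ errors. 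Summing with the $|\I_2|$ term, dividing by $m$, and invoking Luntz--Brailovsky delivers (\ref{eq:bound}).

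I expect the perturbation inequality to be the genuine obstacle, for two reasons. The convexity comparison must be carried out for the box-constrained, inhomogeneous LDM dual rather than the textbook hard-margin SVM, so I must verify that zeroing the $p$-th coordinate stays inside $[0,C]^m$ and that the linear term $(\tfrac{\lambda_2}{m}\H\e-\e)^\top\alphavec$ does not spoil the one-sided inequality. More delicately, $\Q$ and hence $\H$ are themselves built from the data-dependent sums $\X\X^\top$ and $\X\y(\X\y)^\top$, so removing example $p$ perturbs $\H$ in addition to deleting a row and a column; I would have to argue that this perturbation is controlled---absorbed by the identity term in $\Q$ or treated as lower order---so that $\H$ may be regarded as the fixed effective kernel driving the LOO analysis.
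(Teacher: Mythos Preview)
Your plan is essentially the paper's proof: Luntz--Brailovsky, the three KKT regimes, the crude $|\I_2|$ bound for the at-bound support vectors, and for $p\in\I_1$ the conclusion that a leave-one-out error forces $\alpha_p h_{pp}\ge 1$. The paper establishes your Jaakkola--Haussler-type inequality by the sandwich
\[
f(\alphavec^i)-\min_t f(\alphavec^i+t\e_i)\ \le\ f(\alphavec^i)-f(\alphavec^*)\ \le\ f(\alphavec^*-\alpha_i^*\e_i)-f(\alphavec^*),
\]
where $\alphavec^i=\argmin_{0\le\alphavec\le C,\ \alpha_i=0}f(\alphavec)$; the left end equals $(1-y_i\x_i^\top\w^i)^2/(2h_{ii})$ and the right end equals $(\alpha_i^*)^2h_{ii}/2$, which together with $y_i\x_i^\top\w^i<0$ gives $1\le\alpha_i^*h_{ii}$---exactly the inequality you were aiming for, and the ``feasible-point comparison'' you describe is precisely the right-hand inequality. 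On your second worry: the paper does not perturb $\H$ at all. It \emph{defines} the leave-one-out predictor through the constrained dual above with the \emph{same} $f$ (hence the same $\Q$ and $\H$), rather than by physically removing $(\x_i,y_i)$ and rebuilding $\Q$; under that convention there is no data-dependent kernel drift to control, and your analysis goes through as written.
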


\begin{proof}
Suppose
\begin{align} \label{eq:solution}
\begin{split}
\alphavec^* & = \argmin_{0 \leq \alphavec \leq C} f(\alphavec), \\
\alphavec^i & = \argmin_{0 \leq \alphavec \leq C, \alpha_i = 0} f(\alphavec), \ \ i = 1, \ldots, m,
\end{split}
\end{align}
and the corresponding solution for the linear kernel LDM are $\w^*$ and $\w^i$, respectively.

As shown in \cite{Luntz1969On},
\begin{align} \label{eq:leave-one-out}
E[R(\alphavec)] = \frac{E[\Lcal((\x_1,y_1), \ldots, (\x_m, y_m))]}{m},
\end{align}
where $\Lcal((\x_1,y_1), \ldots, (\x_m, y_m))$ is the number of errors in the leave-one-out procedure. Note that if $\alpha_i^* = 0$, $(\x_i, y_i)$ will always be classified correctly in the leave-one-out procedure according to the KKT conditions. So for any misclassified example $(\x_i, y_i)$, we only need to consider the following two cases:

1) $0 < \alpha_i^* < C$, according to the definition in (\ref{eq:solution}), we have
\begin{align} \label{eq:relation1}
f(\alphavec^i) - \min_t f(\alphavec^i + t \e_i) \leq f(\alphavec^i) - f(\alphavec^*) \leq f(\alphavec^* - \alpha_i^* \e_i) - f(\alphavec^*),
\end{align}
where $\e_i$ denotes a vector with $1$ in the $i$-th coordinate and $0$'s elsewhere. We can find that, the left-hand side of (\ref{eq:relation1}) is equal to $(1 - y_i \x_i^\top \w^i)^2/2 h_{ii}$, and the right-hand side of (\ref{eq:relation1}) is equal to ${\alpha_i^*}^2 h_{ii}/2$. So we have
\begin{align*}
(1 - y_i \x_i^\top \w^i)^2/2 h_{ii} \leq {\alpha_i^*}^2 h_{ii}/2.
\end{align*}
Further note that $y_i \x_i^\top \w^i < 0$, rearranging the above we can obtain $1 \leq \alpha_i^* h_{ii}$.

2) $\alpha_i^* = C$, all these examples will be misclassified in the leave-one-out procedure.

So we have
\begin{align*}
\Lcal((\x_1,y_1), \ldots, (\x_m, y_m)) \leq h \sum_{i \in \I_1} \alpha_i^* + |\I_2|,
\end{align*}
where $\I_1 \equiv \{ i \ | \ 0 < \alpha_i^* < C \}$, $\I_2 \equiv \{ i \ | \ \alpha_i^* = C \}$ and $h = \max\{ h_{ii}, i = 1, \ldots, m \}$. Take expectation on both side and with (\ref{eq:leave-one-out}), we get that (\ref{eq:bound}) holds.
\end{proof}

\begin{table*}[!t]
\scriptsize
\caption{Characteristics of experimental data sets.} \vspace{+2mm}
\centering
\scalebox{1.15}[1.15]{
\begin{tabular}{ c | l c c | l c c}
\hline
\hline \noalign{\smallskip}
 Scale & Dataset & \#Instance & \#Feature & Dataset & \#Instance & \#Feature \\
\noalign{\smallskip} \hline \noalign{\smallskip}
\emph{regular} &\emph{promoters} & {\myfont 106} & {\myfont 57} & \emph{haberman} & {\myfont 306} & {\myfont 14} \\
 & \emph{planning} & {\myfont 182} & {\myfont 12} & \emph{vehicle} & {\myfont 435} & {\myfont 16} \\
 & \emph{colic} & {\myfont 188} & {\myfont 13} & \emph{clean1} & {\myfont 476} & {\myfont 166} \\
 & \emph{parkinsons} & {\myfont 195} & {\myfont 22} & \emph{wdbc} & {\myfont 569} & {\myfont 14} \\
 & \emph{colic.ORIG} & {\myfont 205} & {\myfont 17} & \emph{isolet} & {\myfont 600} & {\myfont 51} \\
 & \emph{sonar} & {\myfont 208} & {\myfont 60} & \emph{credit-a} & {\myfont 653} & {\myfont 15} \\
 & \emph{vote} & {\myfont 232} & {\myfont 16} & \emph{austra} & {\myfont 690} & {\myfont 15} \\
 & \emph{house} & {\myfont 232} & {\myfont 16} & \emph{australian} & {\myfont 690} & {\myfont 42} \\
 & \emph{heart} & {\myfont 270} & {\myfont 9} & \emph{fourclass} & {\myfont 862} & {\myfont 2} \\
 & \emph{breast} & {\myfont 277} & {\myfont 9} & \emph{german} & {\myfont 1,000} & {\myfont 59} \\
\noalign{\smallskip} \hline \noalign{\smallskip}
\emph{large} & \emph{farm-ads} & {\myfont 4,143} & {\myfont 54,877} & \emph{ijcnn1} & {\myfont 141,691} & {\myfont 22} \\
 & \emph{news20} & {\myfont 19,996} & {\myfont 1,355,191} & \emph{skin} & {\myfont 245,057} & {\myfont 3} \\
 & \emph{adult-a} & {\myfont 32,561} & {\myfont 123} & \emph{covtype} & {\myfont 581,012} & {\myfont 54} \\
 & \emph{w8a} & {\myfont 49,749} & {\myfont 300} & \emph{rcv1} & {\myfont 697,641} & {\myfont 47,236} \\
 & \emph{cod-rna} & {\myfont 59,535} & {\myfont 8} & \emph{url} & {\myfont 2,396,130} & {\myfont 3,231,961} \\
 & \emph{real-sim} & {\myfont 72,309} & {\myfont 20,958} & \emph{kdd2010} & {\myfont 8,407,752} & {\myfont 20,216,830} \\
\noalign{\smallskip} \hline
\hline
\end{tabular}}
\label{table 1}
\end{table*}

\section{Empirical Study} \label{sec:experiments}

In this section, we empirically evaluate the effectiveness of LDM on a broad range of data sets. We first introduce the experimental settings in Section \ref{sec:experimental setup}, and then compare LDM with SVM and three state-of-the-art approaches\footnote{These approaches will be briefly introduced in Section \ref{sec:Related Work}.} in Section \ref{sec:Resultsb} and Section \ref{sec:Resultsl}. In addition, we also study the cumulative margin distribution produced by LDM and SVM in Section \ref{sec:IMD}. The computational cost and parameter influence are presented in Section \ref{sec:tc} and Section \ref{sec:pi}, respectively.

\begin{table*}[!t]
\scriptsize
\caption{Accuracy (mean$\pm$std.) comparison on regular scale data sets. Linear kernels are used. The best accuracy on each data set is bolded. $\bullet$/$\circ$ indicates the performance is significantly better/worse than SVM (paired $t$-tests at 95\% significance level). The win/tie/loss counts are summarized in the last row.}
\centering
\vskip 0.1in
\scalebox{1.15}[1.15]{
\begin{tabular}{ l | l l l l l }
\hline \hline \noalign{\smallskip}
Dataset & \multicolumn{1}{c}{SVM} & \multicolumn{1}{c}{MDO} & \multicolumn{1}{c}{MAMC} & \multicolumn{1}{c}{KM-OMD} & \multicolumn{1}{c}{LDM} \\
\noalign{\smallskip} \hline \noalign{\smallskip}
\emph{promoters} & {\myfont 0.723$\pm$0.071 } & {\myfont 0.713$\pm$0.067} & {\myfont 0.520$\pm$0.096}$\circ$ & {\myfontb 0.736$\pm$0.061} & {\myfont 0.721$\pm$0.069} \\
\emph{planning-relax} & {\myfont 0.683$\pm$0.031 } & {\myfont 0.605$\pm$0.185}$\circ$ & {\myfontb 0.706$\pm$0.034}$\bullet$ & {\myfont 0.479$\pm$0.050}$\circ$ & {\myfontb 0.706$\pm$0.034}$\bullet$ \\
\emph{colic} & {\myfont 0.814$\pm$0.035 } & {\myfont 0.781$\pm$0.154} & {\myfont 0.661$\pm$0.062}$\circ$ & {\myfont 0.813$\pm$0.028} & {\myfontb 0.832$\pm$0.026}$\bullet$ \\
\emph{parkinsons} & {\myfont 0.846$\pm$0.038 } & {\myfont 0.732$\pm$0.270}$\circ$ & {\myfont 0.764$\pm$0.035}$\circ$ & {\myfont 0.814$\pm$0.024}$\circ$ & {\myfontb 0.865$\pm$0.030}$\bullet$ \\
\emph{colic.ORIG} & {\myfont 0.618$\pm$0.027 } & {\myfont 0.624$\pm$0.040} & {\myfont 0.623$\pm$0.027} & {\myfontb 0.635$\pm$0.045}$\bullet$ & {\myfont 0.619$\pm$0.042} \\
\emph{sonar} & {\myfont 0.725$\pm$0.039 } & {\myfont 0.734$\pm$0.035} & {\myfont 0.533$\pm$0.045}$\circ$ & {\myfontb 0.766$\pm$0.033}$\bullet$ & {\myfont 0.736$\pm$0.036} \\
\emph{vote} & {\myfont 0.934$\pm$0.022 } & {\myfont 0.587$\pm$0.435}$\circ$ & {\myfont 0.884$\pm$0.022}$\circ$ & {\myfont 0.957$\pm$0.013}$\bullet$ & {\myfontb 0.970$\pm$0.014}$\bullet$ \\
\emph{house} & {\myfont 0.942$\pm$0.015 } & {\myfont 0.943$\pm$0.015} & {\myfont 0.883$\pm$0.029}$\circ$ & {\myfont 0.957$\pm$0.020}$\bullet$ & {\myfontb 0.968$\pm$0.011}$\bullet$ \\
\emph{heart} & {\myfont 0.799$\pm$0.029 } & {\myfont 0.826$\pm$0.026}$\bullet$ & {\myfont 0.537$\pm$0.057}$\circ$ & {\myfontb 0.836$\pm$0.026}$\bullet$ & {\myfont 0.791$\pm$0.030} \\
\emph{breast-cancer} & {\myfont 0.717$\pm$0.033 } & {\myfont 0.710$\pm$0.031} & {\myfont 0.706$\pm$0.027} & {\myfont 0.696$\pm$0.031}$\circ$ & {\myfontb 0.725$\pm$0.027}$\bullet$ \\
\emph{haberman} & {\myfont 0.734$\pm$0.030 } & {\myfont 0.728$\pm$0.029} & {\myfontb 0.738$\pm$0.020} & {\myfont 0.667$\pm$0.040}$\circ$ & {\myfontb 0.738$\pm$0.020} \\
\emph{vehicle} & {\myfont 0.959$\pm$0.012 } & {\myfont 0.956$\pm$0.012} & {\myfont 0.566$\pm$0.160}$\circ$ & {\myfontb 0.960$\pm$0.010} & {\myfont 0.959$\pm$0.013} \\
\emph{clean1} & {\myfont 0.803$\pm$0.035 } & {\myfont 0.798$\pm$0.031} & {\myfont 0.561$\pm$0.025}$\circ$ & {\myfontb 0.821$\pm$0.027}$\bullet$ & {\myfont 0.814$\pm$0.019}$\bullet$ \\
\emph{wdbc} & {\myfont 0.963$\pm$0.012 } & {\myfont 0.966$\pm$0.010} & {\myfont 0.623$\pm$0.020}$\circ$ & {\myfont 0.968$\pm$0.009}$\bullet$ & {\myfontb 0.968$\pm$0.011}$\bullet$ \\
\emph{isolet} & {\myfont 0.995$\pm$0.003 } & {\myfont 0.501$\pm$0.503}$\circ$ & {\myfont 0.621$\pm$0.207}$\circ$ & {\myfont 0.995$\pm$0.003} & {\myfontb 0.997$\pm$0.002}$\bullet$ \\
\emph{credit-a} & {\myfont 0.861$\pm$0.014 } & {\myfont 0.862$\pm$0.013} & {\myfont 0.596$\pm$0.063}$\circ$ & {\myfont 0.863$\pm$0.013} & {\myfontb 0.864$\pm$0.013}$\bullet$ \\
\emph{austra} & {\myfont 0.857$\pm$0.013 } & {\myfont 0.842$\pm$0.055} & {\myfont 0.567$\pm$0.044}$\circ$ & {\myfont 0.858$\pm$0.013} & {\myfontb 0.859$\pm$0.015} \\
\emph{australian} & {\myfont 0.844$\pm$0.019 } & {\myfont 0.842$\pm$0.020} & {\myfont 0.576$\pm$0.049}$\circ$ & {\myfont 0.858$\pm$0.016}$\bullet$ & {\myfontb 0.866$\pm$0.014}$\bullet$ \\
\emph{fourclass} & {\myfont 0.724$\pm$0.014 } & {\myfont 0.377$\pm$0.238}$\circ$ & {\myfont 0.641$\pm$0.020}$\circ$ & {\myfontb 0.736$\pm$0.014}$\bullet$ & {\myfont 0.723$\pm$0.014} \\
\emph{german} & {\myfont 0.711$\pm$0.030 } & {\myfont 0.737$\pm$0.014}$\bullet$ & {\myfont 0.697$\pm$0.017}$\circ$ & {\myfont 0.729$\pm$0.017}$\bullet$ & {\myfontb 0.738$\pm$0.016}$\bullet$ \\
\noalign{\smallskip} \hline \noalign{\smallskip}
Ave. accuracy & \multicolumn{1}{c}{ \myfont{ 0.813 }} & \multicolumn{1}{c}{ \myfont{ 0.743 }} & \multicolumn{1}{c}{ \myfont{ 0.650 }} & \multicolumn{1}{c}{ \myfont{ 0.807 }} & \multicolumn{1}{c}{ \myfont{ 0.823 }} \\
\noalign{\smallskip} \hline \noalign{\smallskip}
LDM: w/t/l & \multicolumn{1}{c}{ \myfont{ 12/8/0}} & \multicolumn{1}{c}{ \myfont{ 9/10/1}} & \multicolumn{1}{c}{ \myfont{ 17/3/0}} & \multicolumn{1}{c}{ \myfont{ 10/5/5}} \\
\noalign{\smallskip} \hline \hline
\end{tabular}}
\label{table 2}
\end{table*}

\subsection{Experimental Setup} \label{sec:experimental setup}

We evaluate the effectiveness of our proposed LDMs on twenty regular scale data sets and twelve large scale data sets, including both UCI data sets and real-world data sets like KDD2010\footnote{https://pslcdatashop.web.cmu.edu/KDDCup/downloads.jsp}. Table \ref{table 1} summarizes the statistics of these data sets. The data set size is ranged from 106 to more than 8,000,000, and the dimensionality is ranged from 2 to more than 20,000,000, covering a broad range of properties. All features are normalized into the interval $[0, 1]$. For each data set, half of examples are randomly selected as the training data, and the remaining examples are used as the testing data. For regular scale data sets, both linear and RBF kernels are evaluated. Experiments are repeated for 30 times with random data partitions, and the average accuracies as well as the standard deviations are recorded. For large scale data sets, linear kernel is evaluated. Experiments are repeated for 10 times with random data partitions, and the average accuracies (with standard deviations) are recorded.

\begin{table*}[!t]
\scriptsize
\caption{Accuracy (mean$\pm$std.) comparison on regular scale data sets. RBF kernels are used. The best accuracy on each data set is bolded. $\bullet$/$\circ$ indicates the performance is significantly better/worse than SVM (paired $t$-tests at 95\% significance level). The win/tie/loss counts are summarized in the last row. MDO does not have results since it is specified for the linear kernel.}
\centering
\vskip 0.1in
\scalebox{1.15}[1.15]{
\begin{tabular}{ l | l c l l l }
\hline \hline \noalign{\smallskip}
Dataset & \multicolumn{1}{c}{SVM} & \multicolumn{1}{c}{MDO} & \multicolumn{1}{c}{MAMC} & \multicolumn{1}{c}{KM-OMD} & \multicolumn{1}{c}{LDM} \\
\noalign{\smallskip} \hline \noalign{\smallskip}
\emph{promoters} & {\myfont 0.684$\pm$0.100 } & N/A & {\myfont 0.638$\pm$0.121}$\circ$ & {\myfont 0.701$\pm$0.085} & {\myfontb 0.715$\pm$0.074}$\bullet$ \\
\emph{planning-relax} & {\myfontb 0.708$\pm$0.035 } & N/A & {\myfont 0.706$\pm$0.034} & {\myfont 0.683$\pm$0.031}$\circ$ & {\myfont 0.707$\pm$0.034} \\
\emph{colic} & {\myfont 0.822$\pm$0.033 } & N/A & {\myfont 0.623$\pm$0.037}$\circ$ & {\myfont 0.825$\pm$0.024} & {\myfontb 0.841$\pm$0.018}$\bullet$ \\
\emph{parkinsons} & {\myfontb 0.929$\pm$0.029 } & N/A & {\myfont 0.852$\pm$0.036}$\circ$ & {\myfont 0.906$\pm$0.033}$\circ$ & {\myfont 0.927$\pm$0.029} \\
\emph{colic.ORIG} & {\myfont 0.638$\pm$0.043 } & N/A & {\myfont 0.623$\pm$0.027} & {\myfont 0.621$\pm$0.039} & {\myfontb 0.641$\pm$0.044} \\
\emph{sonar} & {\myfont 0.842$\pm$0.034 } & N/A & {\myfont 0.753$\pm$0.052}$\circ$ & {\myfont 0.821$\pm$0.051}$\circ$ & {\myfontb 0.846$\pm$0.032} \\
\emph{vote} & {\myfont 0.946$\pm$0.016 } & N/A & {\myfont 0.913$\pm$0.019}$\circ$ & {\myfont 0.930$\pm$0.029}$\circ$ & {\myfontb 0.968$\pm$0.013}$\bullet$ \\
\emph{house} & {\myfont 0.953$\pm$0.020 } & N/A & {\myfont 0.561$\pm$0.139}$\circ$ & {\myfont 0.938$\pm$0.022}$\circ$ & {\myfontb 0.964$\pm$0.013}$\bullet$ \\
\emph{heart} & {\myfont 0.808$\pm$0.025 } & N/A & {\myfont 0.540$\pm$0.043}$\circ$ & {\myfont 0.805$\pm$0.048} & {\myfontb 0.822$\pm$0.029}$\bullet$ \\
\emph{breast-cancer} & {\myfont 0.729$\pm$0.030 } & N/A & {\myfont 0.706$\pm$0.027}$\circ$ & {\myfont 0.691$\pm$0.024}$\circ$ & {\myfontb 0.753$\pm$0.027}$\bullet$ \\
\emph{haberman} & {\myfont 0.727$\pm$0.024 } & N/A & {\myfontb 0.742$\pm$0.021}$\bullet$ & {\myfont 0.676$\pm$0.042}$\circ$ & {\myfont 0.731$\pm$0.027} \\
\emph{vehicle} & {\myfont 0.992$\pm$0.007 } & N/A & {\myfont 0.924$\pm$0.025}$\circ$ & {\myfont 0.988$\pm$0.008}$\circ$ & {\myfontb 0.993$\pm$0.006} \\
\emph{clean1} & {\myfont 0.890$\pm$0.020 } & N/A & {\myfont 0.561$\pm$0.025}$\circ$ & {\myfont 0.772$\pm$0.043}$\circ$ & {\myfontb 0.891$\pm$0.024} \\
\emph{wdbc} & {\myfont 0.951$\pm$0.011 } & N/A & {\myfont 0.740$\pm$0.042}$\circ$ & {\myfont 0.941$\pm$0.040} & {\myfontb 0.961$\pm$0.010}$\bullet$ \\
\emph{isolet} & {\myfont 0.998$\pm$0.002 } & N/A & {\myfont 0.994$\pm$0.004}$\circ$ & {\myfont 0.995$\pm$0.003}$\circ$ & {\myfontb 0.998$\pm$0.002} \\
\emph{credit-a} & {\myfont 0.858$\pm$0.014 } & N/A & {\myfont 0.542$\pm$0.032}$\circ$ & {\myfont 0.845$\pm$0.029}$\circ$ & {\myfontb 0.861$\pm$0.013} \\
\emph{austra} & {\myfont 0.853$\pm$0.013 } & N/A & {\myfont 0.560$\pm$0.018}$\circ$ & {\myfont 0.854$\pm$0.017} & {\myfontb 0.857$\pm$0.014}$\bullet$ \\
\emph{australian} & {\myfont 0.815$\pm$0.014 } & N/A & {\myfont 0.554$\pm$0.015}$\circ$ & {\myfontb 0.860$\pm$0.014}$\bullet$ & {\myfont 0.854$\pm$0.016}$\bullet$ \\
\emph{fourclass} & {\myfontb 0.998$\pm$0.003 } & N/A & {\myfont 0.791$\pm$0.014}$\circ$ & {\myfont 0.838$\pm$0.014}$\circ$ & {\myfont 0.998$\pm$0.003} \\
\emph{german} & {\myfont 0.731$\pm$0.019 } & N/A & {\myfont 0.697$\pm$0.017}$\circ$ & {\myfont 0.742$\pm$0.017}$\bullet$ & {\myfontb 0.743$\pm$0.016}$\bullet$ \\
\noalign{\smallskip} \hline \noalign{\smallskip}
Ave. accuracy & \multicolumn{1}{c}{ {\myfont 0.844}} & \multicolumn{1}{c}{N/A} & \multicolumn{1}{c}{ {\myfont 0.701}} & \multicolumn{1}{c}{ {\myfont 0.822}} & \multicolumn{1}{c}{{\myfont 0.854}} \\
\noalign{\smallskip} \hline \noalign{\smallskip}
LDM: w/t/l & \multicolumn{1}{c}{ {\myfont 10/10/0}} & \multicolumn{1}{c}{N/A} & \multicolumn{1}{c}{ {\myfont 18/1/1}} & \multicolumn{1}{c}{ {\myfont 15/5/0}} \\
\noalign{\smallskip} \hline \hline
\end{tabular}}
\label{table 3}
\end{table*}

\begin{table*}[!t]
\scriptsize
\caption{Accuracy (mean$\pm$std.) comparison on large scale data sets. Linear kernels are used. The best accuracy on each data set is bolded. $\bullet$/$\circ$ indicates the performance is significantly better/worse than SVM (paired $t$-tests at 95\% significance level). The win/tie/loss counts are summarized in the last row. KM-OMD and MDO did not return results on some data sets in 48 hours.}
\centering
\vskip 0.1in
\scalebox{1.15}[1.15]{
\begin{tabular}{ l | l l l l l }
\hline \hline \noalign{\smallskip}
Dataset & \multicolumn{1}{c}{SVM} & \multicolumn{1}{c}{MDO} & \multicolumn{1}{c}{MAMC} & \multicolumn{1}{c}{KM-OMD} & \multicolumn{1}{c}{LDM} \\
\noalign{\smallskip} \hline \noalign{\smallskip}
\emph{farm-ads} & {\myfont 0.880$\pm$0.007} & {\myfont 0.880$\pm$0.007} & {\myfont 0.759$\pm$0.038}$\circ$ & \multicolumn{1}{c}{N/A} & {\myfontb 0.890$\pm$0.008}$\bullet$ \\
\emph{news20} & {\myfont 0.954$\pm$0.002} & {\myfont 0.948$\pm$0.002}$\circ$ & {\myfont 0.772$\pm$0.017}$\circ$ & \multicolumn{1}{c}{N/A} & {\myfontb 0.960$\pm$0.001}$\bullet$ \\
\emph{adult-a} & {\myfont 0.845$\pm$0.002} & {\myfont 0.788$\pm$0.053}$\circ$ & {\myfont 0.759$\pm$0.002}$\circ$ & \multicolumn{1}{c}{N/A} & {\myfontb 0.846$\pm$0.003}$\bullet$ \\
\emph{w8a} & {\myfont 0.983$\pm$0.001} & {\myfontb 0.985$\pm$0.001}$\bullet$ & {\myfont 0.971$\pm$0.001}$\circ$ & \multicolumn{1}{c}{N/A} & {\myfont 0.983$\pm$0.001} \\
\emph{cod-rna} & {\myfontb 0.899$\pm$0.001} & {\myfont 0.774$\pm$0.203} & {\myfont 0.667$\pm$0.001}$\circ$ & \multicolumn{1}{c}{N/A} & {\myfont 0.899$\pm$0.001} \\
\emph{real-sim} & {\myfont 0.961$\pm$0.001} & {\myfont 0.955$\pm$0.002}$\circ$ & {\myfont 0.744$\pm$0.004}$\circ$ & \multicolumn{1}{c}{N/A} & {\myfontb 0.971$\pm$0.001}$\bullet$ \\
\emph{ijcnn1} & {\myfont 0.921$\pm$0.003} & {\myfontb 0.921$\pm$0.002} & {\myfont 0.904$\pm$0.001}$\circ$ & \multicolumn{1}{c}{N/A} & {\myfont 0.921$\pm$0.002} \\
\emph{skin} & {\myfont 0.934$\pm$0.001} & {\myfont 0.929$\pm$0.003}$\circ$ & {\myfont 0.792$\pm$0.000}$\circ$ & \multicolumn{1}{c}{N/A} & {\myfontb 0.934$\pm$0.001} \\
\emph{covtype} & {\myfont 0.762$\pm$0.001} & {\myfont 0.760$\pm$0.003}$\circ$ & {\myfont 0.628$\pm$0.002}$\circ$ & \multicolumn{1}{c}{N/A} & {\myfontb 0.763$\pm$0.001} \\
\emph{rcv1} & {\myfont 0.969$\pm$0.000} & {\myfont 0.959$\pm$0.000}$\circ$ & {\myfont 0.913$\pm$0.000}$\circ$ & \multicolumn{1}{c}{N/A} & {\myfontb 0.977$\pm$0.000}$\bullet$ \\
\emph{url} & {\myfontb 0.993$\pm$0.006} & {\myfont 0.993$\pm$0.006} & {\myfont 0.670$\pm$0.000}$\circ$ & \multicolumn{1}{c}{N/A} & {\myfont 0.993$\pm$0.006} \\
\emph{kdd2010} & {\myfont 0.852$\pm$0.001} & \multicolumn{1}{c}{N/A} & {\myfont 0.853$\pm$0.000}$\bullet$ & \multicolumn{1}{c}{N/A} & {\myfontb 0.881$\pm$0.001}$\bullet$ \\
\noalign{\smallskip} \hline \noalign{\smallskip}
Ave. accuracy & \multicolumn{1}{c}{ {\myfont 0.913}} & \multicolumn{1}{c}{ {\myfont 0.899}} & \multicolumn{1}{c}{ {\myfont 0.786}} & \multicolumn{1}{c}{N/A} & \multicolumn{1}{c}{ {\myfont 0.919}} \\
\noalign{\smallskip} \hline \noalign{\smallskip}
LDM: w/t/l & \multicolumn{1}{c}{{\myfont 6/6/0}} & \multicolumn{1}{c}{{\myfont 7/3/1}} & \multicolumn{1}{c}{{\myfont 12/0/0}} & \multicolumn{1}{c}{N/A} \\
\noalign{\smallskip} \hline \hline
\end{tabular}}
\label{table 4}
\end{table*}

LDMs are compared with standard SVMs which ignore the margin distribution, and three state-of-the-art methods, that is, Margin Distribution Optimization (MDO) \cite{Garg2003Margin}, Maximal Average Margin for Classifiers (MAMC) \cite{Pelckmans2007A} and Kernel Method for the direct Optimization of the Margin Distribution (KM-OMD) \cite{Aiolli2008A}. For SVM, KM-OMD and LDM, the regularization parameter $C$ is selected by 5-fold cross validation from $[10, 50, 100]$. For MDO, the parameters are set as the recommended parameters in \cite{Garg2003Margin}. For LDM, the regularization parameters $\lambda_1$, $\lambda_2$ are selected by 5-fold cross validation from the set of $[2^{-8}, \ldots, 2^{-2}]$, the parameters $\eta_t$ and $t_0$ are set with the same setup in \cite{Xu2010Towards}, and $T$ is fixed to $5$. The width of the RBF kernel for SVM, MAMC, KM-OMD and LDM are selected by 5-fold cross validation from the set of $[2^{-2}\delta, \ldots, 2^2\delta]$, where $\delta$ is the average distance between instances. All selections are performed on training sets.

\begin{figure*}[!t]
\begin{center}
\centerline{\includegraphics[width=\columnwidth]{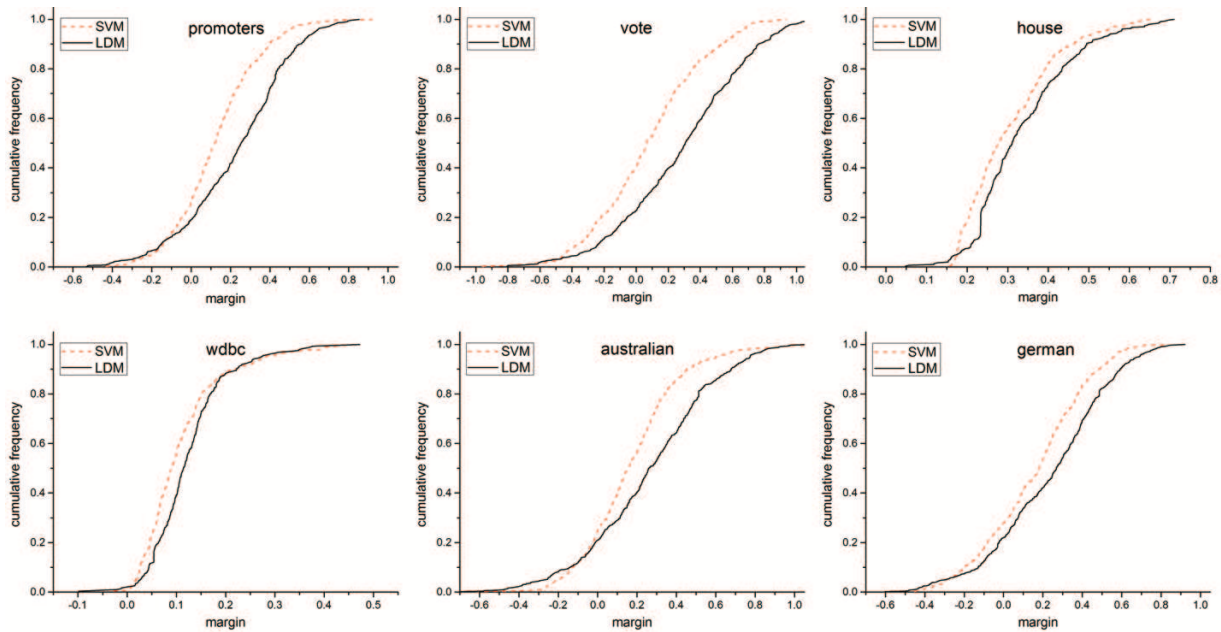}}
\caption{Cumulative frequency ($y$-axis) with respect to margin ($x$-axis) of SVM and LDM on some representative regular scale data sets. The more right the curve, the larger the accumulated margin.}
\label{fig:Margin Distributions}
\end{center}
\vskip -0.2in
\end{figure*}

\subsection{Results on Regular Scale Data Sets} \label{sec:Resultsb}

Tables \ref{table 2} and \ref{table 3} summarize the results on twenty regular scale data sets. As can be seen, the overall performance of LDM is superior or highly competitive to SVM and other compared methods. Specifically, for linear kernel, LDM performs significantly better than SVM, MDO, MAMC, KM-OMD on 12, 9, 17 and 10 over 20 data sets, respectively, and achieves the best accuracy on 13 data sets; for RBF kernel, LDM performs significantly better than SVM, MAMC, KM-OMD on 10, 18 and 15 over 20 data sets, respectively, and achieves the best accuracy on 15 data sets. MDO is not compared since it is specified for the linear kernel. In addition, as can be seen, in comparing with standard SVM which does not consider margin distribution, the win/tie/loss counts show that LDM is always better or comparable, never worse than SVM.

\subsection{Results on Large Scale Data Sets} \label{sec:Resultsl}

Table \ref{table 4} summarizes the results on twelve large scale data sets. KM-OMD did not return results on all data sets and MDO did not return results on KDD2010 in 48 hours due to the high computational cost. As can be seen, the overall performance of LDM is superior or highly competitive to SVM and other compared methods. Specifically, LDM performs significantly better than SVM, MDO, MAMC on 6, 7 and 12 over 12 data sets, respectively, and achieves the best accuracy on 8 data sets. In addition, the win/tie/loss counts show that LDM is always better or comparable, never worse than SVM.

\subsection{Margin Distributions} \label{sec:IMD}

Figure \ref{fig:Margin Distributions} plots the cumulative margin distribution of SVM and LDM on some representative regular scale data sets. The curves for other data sets are similar. The point where a curve and the $x$-axis crosses is the corresponding minimum margin. As can be seen, LDM usually has a little bit smaller minimum margin than SVM, whereas the LDM curve generally lies on the right side, showing that the margin distribution of LDM is generally better than that of SVM. In other words, for most examples, LDM generally produce a larger margin than SVM.

\begin{figure*}[!t]
\begin{center}
\centerline{\includegraphics[width=\columnwidth]{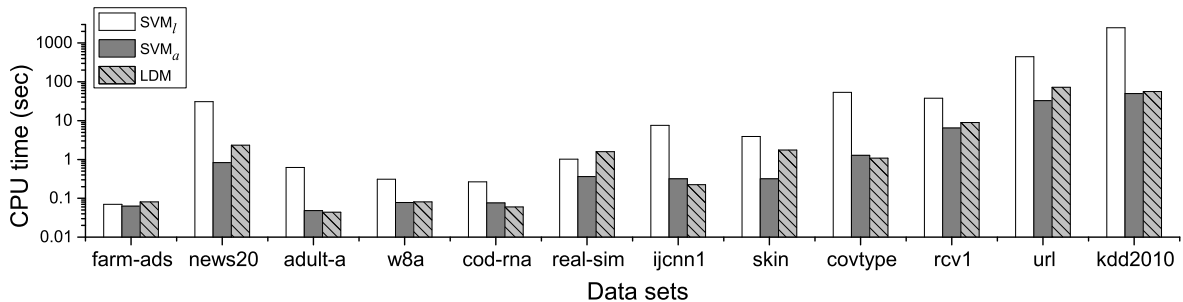}}
\caption{CPU time on the large scale data sets.}
\label{fig: Running Time}
\end{center}
\vskip -0.2in
\end{figure*}

\subsection{Time Cost} \label{sec:tc}

We compare the time cost of LDM and SVM on the twelve large scale data sets. All the experiments are performed with MATLAB 2012b on a machine with 8\(\times\)2.60 GHz CPUs and 16GB main memory. The average CPU time (in seconds) on each data set is shown in Figure \ref{fig: Running Time}. We denote SVM implemented by the LIBLINEAR \cite{Fan2008LIBLINEAR} package as SVM$_l$ and SVM implemented by ASGD\footnote{http://leon.bottou.org/projects/sgd} as SVM$_a$, respectively. It can be seen that, both SVM$_a$ and LDM are much faster than SVM$_l$, owing to the use of ASGD. LDM is just slightly slower than SVM$_a$ on three data sets (news20, real-sim and skin) but highly competitive with SVM$_a$ on the other nine data sets. Note that both SVM$_l$ and SVM$_a$ are very fast implementations of SVMs; this shows that LDM is also computationally efficient.

\subsection{Parameter Influence} \label{sec:pi}
LDM has three regularization parameters, i.e., $\lambda_1$, $\lambda_2$ and $C$. In previous empirical studies, they are set according to cross validation. Figure \ref{fig: pi} further studies the influence of them on some representative regular scale data sets by fixing other parameters. Specifically, Figure \ref{fig: pi}(a) shows the influence of $\lambda_1$ on the accuracy by varying it from $2^{-8}$ to $2^{-2}$ while fixing $\lambda_2$ and $C$ as the value suggested by the cross validation described in Section \ref{sec:experimental setup}. Figure \ref{fig: pi}(b) and Figure \ref{fig: pi}(c) are obtained in the same way. It can be seen that, the performance of LDM is not very sensitive to the setting of the parameters, making LDM even more attractive in practice.

\begin{figure*}[!t]
\begin{center}
\centerline{\includegraphics[width=\columnwidth]{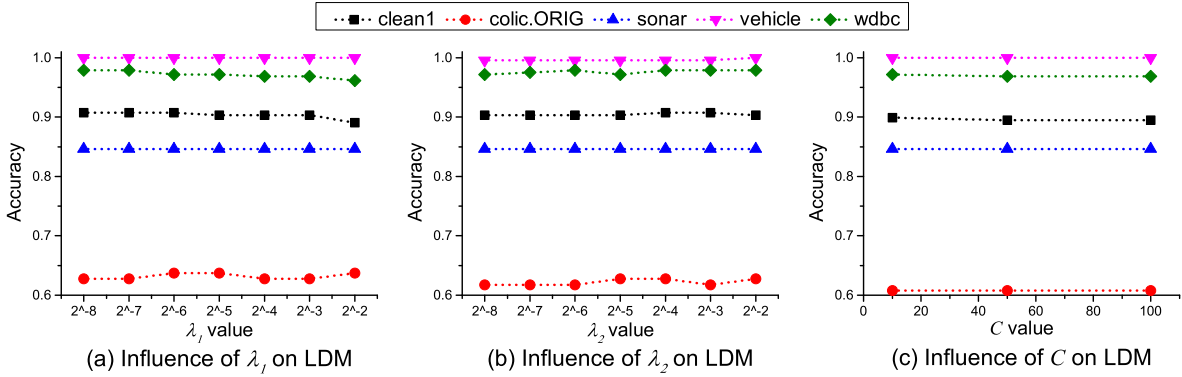}}
\caption{Parameter influence on some representative regular scale data sets.}
\label{fig: pi}
\end{center}
\vskip -0.2in
\end{figure*}

\section{Related Work} \label{sec:Related Work}

There are a few studies considered margin distribution in SVM-like algorithms \cite{Garg2003Margin,Pelckmans2007A,Aiolli2008A}. Garg et al. \cite{Garg2003Margin} proposed the Margin Distribution Optimization (MDO) algorithm which minimizes the sum of the cost of each instance, where the cost is a function which assigns larger values to instances with smaller margins. MDO can be viewed as a method of optimizing weighted margin combination, where the weights are related to the margins. The objective function optimized by MDO, however, is non-convex, and thus, it may get stuck in local minima. In addition, MDO can only be used for linear kernel. As our experiments in Section 4 disclosed, the performance of MDO is inferior to LDM.

Pelckmans et al. \cite{Pelckmans2007A} proposed the Maximal Average Margin for Classifiers (MAMC) and it can be viewed as a special case of LDM assuming that the margin variance is zero. MAMC has a closed-form solution, however, it will degenerate to a trivial solution when the classes are not with equal sizes. Our experiments in Section 4 showed that LDM is clearly superior to MAMC.

Aiolli et al. \cite{Aiolli2008A} proposed a Kernel Method for the direct Optimization of the Margin Distribution (KM-OMD) from a game theoretical perspective. Similar to MDO, this method also directly optimizes a weighted combination of margins over the training data, ignoring the influence of margin variances. Besides, this method considers hard-margin only, which may be another reason why it behaves worse than our method. It is noteworthy that the computational cost prohibits KM-OMD to be applied to large scale data, as shown in Table 4.

\section{Conclusions}

Support vector machines work by maximizing the minimum margin. Recent theoretical results suggested that the margin distribution, rather than a single-point margin such as the minimum margin, is more crucial to the generalization performance. In this paper, we propose the large margin distribution machine (LDM) which tries to optimize the margin distribution by maximizing the margin mean and minimizing the margin variance simultaneously. The LDM is a general learning approach which can be used in any place where SVM can be applied. Comprehensive experiments on twenty regular scale data sets and twelve large scale data sets validate the superiority of LDM to SVMs and many state-of-the-art methods. In the future it will be interesting to generalize the idea of LDM to regression and other learning settings.

\bibliographystyle{plain}
\bibliography{Large_Margin_Distribution_Machine}

\end{document}